\newtheorem{theorem}{Theorem}[section]
\newtheorem{corollary}{Corollary}[theorem]
\newtheorem{lemma}[theorem]{Lemma}
\newtheorem{assumption}{Assumption}
\newtheorem{remark}{Remark}
\newcommand\reallywidehat[1]{%
\savestack{\tmpbox}{\stretchto{%
  \scaleto{%
    \scalerel*[\widthof{\ensuremath{#1}}]{\kern-.6pt\bigwedge\kern-.6pt}%
    {\rule[-\textheight/2]{1ex}{\textheight}}
  }{\textheight}%
}{0.5ex}}%
\stackon[1pt]{#1}{\tmpbox}%
}
\DeclareMathOperator{\Tr}{Tr}
\DeclareMathOperator{\diag}{diag}
\title{Embarrassingly parallel MCMC using deep invertible transformations} 
\author{ \textbf{Diego Mesquita} \quad \textbf{Paul Blomstedt} \quad \textbf{Samuel Kaski}   \\
Helsinki Institute for Information Technology HIIT, Department of Computer Science, Aalto University \\
\texttt{\{diego.mesquita, paul.blomstedt, samuel.kaski\}@aalto.fi}
}
\begin{document}

\maketitle

\begin{abstract}
While MCMC methods have become a main work-horse for Bayesian inference, scaling them to large distributed datasets is still a challenge.
Embarrassingly parallel MCMC strategies take a divide-and-conquer stance to achieve this by writing the target posterior as a product of subposteriors, running MCMC for each of them in parallel and subsequently combining the results.
The challenge then lies in devising efficient aggregation strategies. 
Current strategies  trade-off between approximation quality, and costs of communication and computation.
In this work, we introduce a novel method that addresses these issues simultaneously.
Our key insight is to introduce a deep invertible transformation to approximate each of the subposteriors. These approximations can be made accurate even for complex distributions and serve as intermediate representations, keeping the total
communication cost limited.
Moreover, they enable us to sample from the product of the subposteriors using an efficient and stable importance sampling scheme.
We demonstrate that the approach outperforms available state-of-the-art methods in a range of challenging scenarios, including high-dimensional and heterogeneous subposteriors.
\end{abstract}


%

\section{INTRODUCTION}


Markov Chain  Monte Carlo (MCMC) algorithms have cemented themselves as a cornerstone of practical Bayesian analysis.  Nonetheless, accommodating large distributed datasets is still a challenge. For this purpose, methods have been proposed to speed up inference either using  mini-batches \citep[e.g.][]{Ma2015, Quiroz2018} or exploiting parallel computing \citep[e.g.][]{Ahn2014, Johnson2013}, or combinations thereof. For a comprehensive review about scaling up Bayesian inference, we refer to \citet{Angelino+others:2016} and \citet{Robert2018}.

A particularly efficient class of parallel algorithms are embarrassingly parallel MCMC methods, which 
%
%
employ a divide-and-conquer strategy to obtain samples from the posterior
\begin{equation*}
    p(\theta | \mathcal{D}) \propto p(\theta) p(\mathcal{D} | \theta),
\end{equation*}
where $p(\theta)$ is a prior,  $p(\mathcal{D} | \theta)$ is a likelihood function and the data $\mathcal{D}$ are partitioned into $K$ disjoint subsets $\mathcal{D}_1, \ldots, \mathcal{D}_K$.
The general idea is to break the global inference into smaller tasks and combine their results, requiring coordination only in the final aggregation stage.  
More specifically, the target posterior is factorized as
\begin{equation}\label{eq:pde}
    p(\theta | \mathcal{D}) \propto \prod_{k=1}^{K} p(\theta)^{1/K} p(\mathcal{D}_k | \theta) ,
\end{equation}
and the right-hand-side factors, referred to as \emph{subposteriors}, are independently sampled from---in parallel---using an MCMC algorithm of choice. 
The results are then centralized in a coordinating server and aggregated. 
The core challenge lies in devising strategies which are both accurate and computationally convenient to combine subposterior samples. 

The seminal work of \citet{41849} approximates posterior samples as weighted averages of subposterior samples.
\citet{Neiswanger2014} proposed parametric, semi-parametric and non-parametric strategies, the two former being based on fitting kernel density estimators to the subposterior samples.
\citet{Wang2015} used random partition trees to learn a discrete approximation to the posterior. 
\citet{nemeth2018} fitted Gaussian process approximations to the log-subposteriors and took the product of their expected values.
%
%
Except for the parametric method, which imposes overly simplistic local approximations that generally result in poor approximations of the target posterior, all of the aforementioned approaches require the subposterior samples to be centralized, incurring extensive communication costs.
In fact, communication costs have been altogether ignored in the literature so far.
Furthermore, sampling from the approximate posterior can become difficult, requiring expensive additional MCMC steps to obtain samples from the combined posterior.



In this work, we propose a novel embarrassingly parallel MCMC strategy termed \emph{non-volume-preserving aggregation product} (NAP), which addresses the aforementioned issues while providing accurate posterior samples. 
Our work builds on the insight that subposteriors of arbitrary complexity can be mapped to densities of tractable form, making use of 
\emph{real non-volume preserving trasformations} (real NVP),
a recently developed class of 
neural-network based invertible transformations \citep{Dinh2017}.
This enables us to accurately evaluate the subposterior densities and sample from the combined posterior using importance sampling.
%
%
We prove that, under mild assumptions, our importance sampling scheme is  stable, i.e., estimates for a test function $h$ have finite variance. %

Experimental results show that NAP outperforms state-of-the art methods in several situations, including heterogeneous subposteriors and intricate-shaped, multi-modal or high-dimensional posteriors. 
Finally, the proposed strategy results in communication costs which are constant in the number of subposterior samples, which is an appealing feature when communication between machines holding data shards and the server is expensive or limited. 

The remainder of this work proceeds as follows. Section \ref{sec:meth} introduces our method, covering the required background on real NVP transformations.
Section \ref{sec:experiments} presents experimental results.  We conclude with a discussion on the results and possible unfoldings of this work in Section \ref{sec:discussion}.


\section{METHOD}
\label{sec:meth}

In this work, we employ real NVP transformations to approximate subposteriors using samples obtained from independent MCMC runs. In the following subsections, we 1) review the basics of real NVP transformations; 2) discuss how to combine them using importance sampling and 3) how to obtain samples from the approximate posterior using sampling/importance resampling.

\subsection{REAL NVP DENSITY ESTIMATION}

Real NVP \citep{Dinh2017} is a class of deep generative models in which a $D$-dimensional real-valued quantity of interest $x$ is modeled as a composition of bijective transformations from a base latent variable $z$, with known density function $p_{Z}$, \emph{i.e.}:
\begin{equation*}
    x = g_L \circ  g_{L-1} \circ \ldots \circ g_1(z) = g(z),
\end{equation*}
such that $g_l : \mathbb{R}^D \rightarrow \mathbb{R}^D$ for all $1 \leq l \leq L$. 
The density $p_X(x)$ is then obtained using the change-of-variable formula
\begin{equation}\label{eq:change_var}
    p_X(x) = p_{Z}\big(f(x)\big) \bigg| \det \frac{\partial f(x)}{\partial x^\top} \bigg|,
\end{equation}
where 
\[
f = f_1 \circ f_2 \circ \ldots \circ f_L = g_1^{-1} \circ g_2^{-1} \circ \ldots \circ g_L^{-1} = g^{-1}.
\]
To make (\ref{eq:change_var}) tractable, it is composed as follows. 
Let $\mathcal{I}_l \subset \{1, \ldots, D\}$ be a pre-defined proper subset of indices with cardinality $|\mathcal{I}_l|$, and denote its complement by $\overline{\mathcal{I}}_l$. Then, each transformation $v^\prime = f_l(v)$ is computed as:
\begin{align} \label{eq:f_l}
    &v^\prime_{\mathcal{I}_l} = v_{\mathcal{I}_l} \nonumber\\
    &v^\prime_{\overline{\mathcal{I}}_l} = v_{\overline{\mathcal{I}}_l} \odot \exp\{s_l(v_{{\mathcal{I}}_l})\} + t_l(v_{{\mathcal{I}}_l}),
\end{align}
where $\odot$ is an element-wise product. The functions $s_l, t_l : \mathbb{R}^{|\mathcal{I}_l|} \rightarrow \mathbb{R}^{|\overline{\mathcal{I}}_l|}$ are deep neural networks, which perform scale and translation, respectively. 
In particular, the Jacobian of $f_l$, has the form  
\begin{align*}
\frac{\partial v^\prime}{\partial v} = \begin{bmatrix} \mathbb{I}_{|\mathcal{I}_l|} & 0 \\  \frac{\partial v^\prime_{\overline{\mathcal{I}}_l}}{\partial v_{\mathcal{I}_l}} & \diag\big(\exp\{s_l(v_{\mathcal{I}_l})\}\big)
\end{bmatrix},
\end{align*}
which avoids explicit computation of the Jacobian of the functions $s_l$ and $t_l$. For observed data $(x_1,\ldots,x_N)$, the weights of the networks $s_l$ and $t_l$ that implicitly parameterize $p_X$ are estimated via maximum likelihood.  

Sampling from $p_X$ is inexpensive and resumes to sampling $z \sim p_{Z}$, and computing $x=g(z)$. Here, each $g_l$ is of the form
\begin{align} \label{eq:g_l}
    &v_{\mathcal{I}_l} = v^\prime_{\mathcal{I}_l} \nonumber\\
    &v_{\overline{\mathcal{I}}_l} 
     = ( v^{\prime}_{\overline{\mathcal{I}}_l} -  t_l(v^\prime_{{\mathcal{I}}_l}) ) \odot \exp\{ - s_l(v^\prime_{{\mathcal{I}}_l})\},
\end{align}
where (\ref{eq:g_l}) is obtained from (\ref{eq:f_l}) by straightforward inversion.

\subsection{COMBINING LOCAL INFERENCES}

Consider now a factorization of a target posterior density $p(\theta | \mathcal{D})$ into a product of $K$ subposteriors according to Equation~(\ref{eq:pde}). In embarrassingly parallel MCMC, each worker runs MCMC independently on its respective subposterior,
\begin{equation*}
    p_k(\theta) := \frac{1}{Z_k} p(\theta)^{1/K} p(\mathcal{D}_k | \theta),
\end{equation*}
to obtain a set of draws $\{\theta_{s}^{(k)}\}_{s=1}^S$ from $p_k(\theta)$. 
The goal is then to produce draws from an approximate target posterior $\reallywidehat{p}(\theta) \approx  p(\theta | \mathcal{D})$, using the $K$ sets of subposterior samples as input. 
This requires estimating the densities $p_k(\theta)$ from the subposterior samples, and sampling from the distribution induced by the product of approximations 
\begin{equation} \label{eq:approx_pde}
\reallywidehat{p}(\theta)  \propto  \reallywidehat{p}_1(\theta) \reallywidehat{p}_2(\theta) \ldots \reallywidehat{p}_K(\theta),
\end{equation}
typically resulting in a trade-off between accuracy and computational efficiency.

In this work, we make use of the fact that bijective transformations using real NVP offers both accurate density estimation and computationally efficient sampling for arbitrarily complex distributions. To this end, we first fit a separate real NVP network to estimate each $p_k$ as $\reallywidehat{p}_k$. 
The networks are then sent to a server that approximates the global posterior as in Equation~(\ref{eq:approx_pde}).

In a typical scenario, one would ultimately be interested in using $\reallywidehat{p}$ to compute the expectation of some function $h : \mathbb{R}^D \rightarrow \mathbb{R}$, such as a predictive density or a utility function. 
In our case, straightforward importance sampling can be used to weight samples drawn from any of the subposteriors. Thus, given a set of $T$ samples drawn from any $\reallywidehat{p}_k$, we obtain the estimate:
\begin{equation*}
  \overline{h}(\theta) = \sum_{t=1}^{T}  w_t h(\theta_t),
\end{equation*}
where the importance weights $w_1, \ldots, w_T$ are normalized to sum to one, and given by 
\begin{equation}\label{eq:weight}
w_t \propto  \frac{\prod_{k^\prime=1}^{K} \reallywidehat{p}_{k^\prime}(\theta_t)}{\reallywidehat{p}_{k}(\theta_t)}.
\end{equation}
This strategy capitalizes on the key properties of real NVP transformations---ease of evaluation and sampling---and avoids the burden of running still more MCMC chains to sample from the aggregated posterior $\reallywidehat{p}(\theta)$, which might be a complicated target due to the underlying neural networks.

While importance sampling estimates can be unreliable if their variance is very high or infinite, we can provide guarantees that $\overline{h}(\theta)$ has finite variance. 
\citet{Geweke89} showed that, for a broad class of test functions, it suffices to prove that ${\prod_{k^\prime=1}^{K} \reallywidehat{p}_{k^\prime}(\theta)}/{\reallywidehat{p}_{k}(\theta)} \leq M$ $\forall \theta$, i.e., the importance weights are bounded. 
We first note that the denominator of the weight in Equation~(\ref{eq:weight}) is included as a factor in the numerator, so that $\reallywidehat{p}_{k}(\theta)=0 \Rightarrow \prod_{k^\prime=1}^{K} \reallywidehat{p}_{k^\prime}(\theta) = 0$. The remaining thing to check is that $\reallywidehat{p}_k(\theta)$ is bounded for all $k$ and all $\theta$.

We begin by making the following assumption on the structure of the neural networks which define the real NVP transformations.
\begin{assumption}
    The neural networks $s_1^{(k)}$, \ldots, $s_L^{(k)}$ associated with the real NVP estimate $\reallywidehat{p}_k$ are equipped with bounded activation functions in their individual output layers. \label{assumption:bounded_networks}
\end{assumption}

\begin{remark}
Note that Assumption \ref{assumption:bounded_networks} is satisfied, for example, when the activation functions in the last layer of the scale networks are the hyperbolic tangent or the logistic function. 
\end{remark}
We place no further assumption on the structure of the remaining layers of $s_1^{(k)}, \ldots, s_L^{(k)}$ or in the overall structure of the translation networks $t_1^{(k)}, \ldots, t_L^{(k)}$.

With the additional condition that we choose an appropriate density for the base variable of the NVP network, we can prove that $\reallywidehat{p}_k$ itself is bounded. 

\begin{lemma}
Given a bounded base density $p_{Z}$, the distribution resulting from $L$ transformations is bounded.  \label{lemma:bounded_nvp}
\end{lemma}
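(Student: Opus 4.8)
The plan is to work directly from the change-of-variable formula (\ref{eq:change_var}) and exhibit a finite upper bound on its right-hand side that is uniform in $x$. Since $p_X(x) = p_Z\big(f(x)\big)\,\big|\det \partial f(x)/\partial x^\top\big|$ and $p_Z$ is bounded by hypothesis, say $p_Z(z) \le B$ for all $z$, it suffices to show that the Jacobian factor $\big|\det \partial f(x)/\partial x^\top\big|$ is bounded above by a constant independent of $x \in \mathbb{R}^D$.

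First I would factor this determinant across the $L$ layers. Writing $f = f_1 \circ \cdots \circ f_L$ and applying the chain rule, $\partial f(x)/\partial x^\top$ is a matrix product of the per-layer Jacobians evaluated at the appropriate intermediate points; taking determinants then gives $\big|\det \partial f(x)/\partial x^\top\big| = \prod_{l=1}^{L} \big|\det \partial f_l/\partial v\big|$. Reading off the triangular block structure of $\partial v'/\partial v$ displayed in the excerpt, the identity block contributes a factor of $1$ and each layer determinant reduces to that of its lower-right block, namely $\det \diag\big(\exp\{s_l(v_{\mathcal{I}_l})\}\big) = \exp\big\{\sum_{i} s_l(v_{\mathcal{I}_l})_i\big\}$, where the sum runs over the $|\overline{\mathcal{I}}_l|$ output coordinates. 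Crucially this factor depends on the network only through $s_l$, not through $t_l$ or the off-diagonal block.

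Next I would invoke Assumption \ref{assumption:bounded_networks}: because the output layer of each $s_l$ uses a bounded activation, every component $s_l(\cdot)_i$ lies in a fixed interval, say $s_l(\cdot)_i \le c_l$, uniformly in its argument. Hence the exponent above is at most $|\overline{\mathcal{I}}_l|\,c_l$ regardless of where $s_l$ is evaluated, so each per-layer factor is bounded by $\exp\{|\overline{\mathcal{I}}_l|\,c_l\}$. Multiplying over the $L$ layers and combining with the bound on $p_Z$ yields
\begin{equation*}
p_X(x) \;\le\; B \prod_{l=1}^{L} \exp\{|\overline{\mathcal{I}}_l|\,c_l\} \;=:\; M \;<\; \infty
\end{equation*}
for every $x$, which is the claimed bound.

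The argument is largely mechanical, and the one point that requires care is the observation that each layer's determinant factor is uniform in the argument of $s_l$: at layer $l$ the network $s_l$ is evaluated at an intermediate value produced by the preceding transformations, but since the activation bound holds for every possible input, these intermediate values never need to be controlled. A secondary subtlety is the sign convention, as the scale enters as $\exp\{+s_l\}$ in $f_l$ but as $\exp\{-s_l\}$ in the inverse $g_l$; however, a bounded activation bounds $s_l$ on both sides, so the exponential is bounded above in either orientation and the conclusion is unaffected.
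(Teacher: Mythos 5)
Your proof is correct and takes essentially the same approach as the paper's: both rest on the fact that each layer's Jacobian determinant equals $\exp\{\sum_i s_l(\cdot)_i\}$, which Assumption \ref{assumption:bounded_networks} bounds uniformly, so the change-of-variable formula multiplies the bound on $p_Z$ by a finite constant. The only difference is presentational---the paper peels off one transformation at a time and proves the claim by induction on $L$, while you unroll the whole composition via the chain rule into a product of per-layer determinant factors.
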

\begin{proof}
As $p_{Z}$ is bounded, there exists some constant $M > 0$ such that
    \begin{equation*}
        p_{Z}(z) \leq M \quad \forall z \in \mathbb{R}^D .
    \end{equation*}
     Let $v_1 = g_1(z)$. Applying the change-of-variable formula we  get
    \begin{align*}
       \log p_{v_1}(v_1) &= \log  p_{Z}(z) + \log\bigg| \det \frac{\partial z}{\partial v_1^\top} \bigg|\\
       &= \log  p_{Z}(z) + \bigg| \Tr\, \diag\big(s_l(v_{\mathcal{I}_l})\big)  \bigg| 
    \end{align*}    
 Let $\mathcal{B}_z > 0$ be the constant bounding $p_{Z}$.
 Using Assumption \ref{assumption:bounded_networks}, since all of the outputs of the neural networks $s_l$  are bounded, their sum is bounded by some $\mathcal{B}_s > 0$.  
  Then, it follows:
 \begin{equation*}
     \log p_{v_1}(v_1) \leq \mathcal{B}_z + \mathcal{B}_s,
 \end{equation*}
 i.e.,  $p_{v_1}$ is bounded.
 Repeating the argument we get a proof by induction on the number of transformations $L$.

\end{proof}

As a direct application of Lemma \ref{assumption:bounded_networks}, we get the desired bound the importance weights. 
\begin{theorem}
    For any $1 \leq k \leq K$, there exists $M>0$ such that for all $\theta \in \mathbb{R}^D$, ${\prod_{k^\prime} \reallywidehat{p}_{k^\prime}(\theta) }/{\reallywidehat{p}_{k}(\theta)} \leq M$.
\end{theorem}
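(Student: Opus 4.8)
The plan is to reduce the statement directly to Lemma~\ref{lemma:bounded_nvp} using the cancellation already flagged in the text. First I would observe that whenever $\reallywidehat{p}_{k}(\theta)>0$, the factor $\reallywidehat{p}_{k}(\theta)$ appearing in the numerator cancels the denominator, leaving
\[
\frac{\prod_{k^\prime=1}^{K} \reallywidehat{p}_{k^\prime}(\theta)}{\reallywidehat{p}_{k}(\theta)} = \prod_{k^\prime \neq k} \reallywidehat{p}_{k^\prime}(\theta).
\]
The complementary case $\reallywidehat{p}_{k}(\theta)=0$ is disposed of by the remark preceding Assumption~\ref{assumption:bounded_networks}: since $\reallywidehat{p}_k$ is itself a factor of the numerator, $\reallywidehat{p}_{k}(\theta)=0$ forces $\prod_{k^\prime} \reallywidehat{p}_{k^\prime}(\theta)=0$, so the weight is taken to be zero there and any bound $M>0$ holds trivially.

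Next I would invoke Lemma~\ref{lemma:bounded_nvp}. Provided each real NVP estimate $\reallywidehat{p}_{k^\prime}$ is constructed from a bounded base density (the ``appropriate density for the base variable'' condition) and satisfies Assumption~\ref{assumption:bounded_networks}, the lemma guarantees that each $\reallywidehat{p}_{k^\prime}$ is a bounded density; write $\reallywidehat{p}_{k^\prime}(\theta) \leq M_{k^\prime}$ for all $\theta$. Since the surviving product runs over the finite index set $\{k^\prime : k^\prime \neq k\}$, I would define the constant $M := \prod_{k^\prime \neq k} M_{k^\prime}$, which is finite and positive, and conclude
\[
\prod_{k^\prime \neq k} \reallywidehat{p}_{k^\prime}(\theta) \leq \prod_{k^\prime \neq k} M_{k^\prime} = M
\]
for all $\theta \in \mathbb{R}^D$, establishing the claimed uniform bound.

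Since the entire argument is a one-line consequence of Lemma~\ref{lemma:bounded_nvp}, I do not expect a genuine obstacle; the only delicate points are bookkeeping. Specifically, I would state explicitly that \emph{every} subposterior network uses a bounded base density so that the lemma applies factor-by-factor, and I would fix the convention that the $0/0$ situation is read as zero (consistent with importance sampling) rather than left undefined. Finiteness of $K$ is what makes the product of per-factor bounds finite, so no issue of uniformity across the $k^\prime$ arises.
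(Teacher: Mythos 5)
Your proposal is correct and follows essentially the same route as the paper: cancel the denominator against its copy in the numerator, invoke Lemma~\ref{lemma:bounded_nvp} to bound each remaining factor $\reallywidehat{p}_{k^\prime}$, and take $M$ to be the product of these bounds over $k^\prime \neq k$. The only difference is that you spell out the cancellation and the $\reallywidehat{p}_{k}(\theta)=0$ convention explicitly, which the paper leaves implicit (it handles that case in the text preceding Assumption~\ref{assumption:bounded_networks} rather than inside the proof).
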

\begin{proof}
    Using Lemma \ref{lemma:bounded_nvp},  let $\mathcal{U}_{k^\prime}$ be the upper bound for $\reallywidehat{p}_{k^\prime}$ and let $M = \prod_{k^\prime \neq k} \mathcal{U}_{k^\prime}$, from which the statement follows.
\end{proof}

This provides the sufficient conditions underlined by \citet{Geweke89}, so that we achieve the following result regarding the overall stability of the importance sampling estimates.
\begin{corollary}
Suppose $\theta_{1} ,\ldots, \theta_{T}$ are samples from $\reallywidehat{p}_k$ for some $1 \leq k \leq K$. Let $w_t \propto {\prod_{k^\prime} \reallywidehat{p}_{k^\prime}(\theta_t) }/{\reallywidehat{p}_{k}(\theta_t)}$, $\sum_t w_t = 1$ and $\mathrm{Var}_{\reallywidehat{p}} [h] < \infty$. Then, the importance sampling estimate $\overline{h}(\theta) = \sum_{t=1}^{T} w_t h(\theta_t)$ has finite variance.
\end{corollary}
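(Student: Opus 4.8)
The plan is to read $\overline{h}(\theta)$ as a self-normalized importance sampling estimator whose proposal is $\reallywidehat{p}_k$ and whose target is the combined posterior $\reallywidehat{p} \propto \prod_{k^\prime} \reallywidehat{p}_{k^\prime}$, and then to check the moment conditions that, following \citet{Geweke89}, deliver a central limit theorem with finite asymptotic variance. Writing the unnormalized weight as $W(\theta) = \prod_{k^\prime} \reallywidehat{p}_{k^\prime}(\theta)/\reallywidehat{p}_k(\theta)$, so that $w_t \propto W(\theta_t)$, the estimator has the ratio form $\overline{h} = \big(\sum_t W(\theta_t)\,h(\theta_t)\big)\big/\big(\sum_t W(\theta_t)\big)$ with $\theta_t \sim \reallywidehat{p}_k$. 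First I would record the support condition already noted above, $\reallywidehat{p}_k(\theta)=0 \Rightarrow \prod_{k^\prime}\reallywidehat{p}_{k^\prime}(\theta)=0$, which makes $\reallywidehat{p}$ absolutely continuous with respect to the proposal and the estimator well defined.

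The core step is to turn the assumption $\mathrm{Var}_{\reallywidehat{p}}[h]<\infty$ into finite second moments of numerator and denominator \emph{under the proposal}, and this is exactly where the preceding Theorem is used. Since that result gives $W(\theta)\le M$ uniformly, I would bound
\[
\mathbb{E}_{\reallywidehat{p}_k}\!\big[W^2\big] \le M\,\mathbb{E}_{\reallywidehat{p}_k}[W] < \infty, \qquad \mathbb{E}_{\reallywidehat{p}_k}\!\big[W^2 h^2\big] \le M\,\mathbb{E}_{\reallywidehat{p}_k}\!\big[W h^2\big] = M Z\, \mathbb{E}_{\reallywidehat{p}}\!\big[h^2\big] < \infty,
\]
where $Z = \int \prod_{k^\prime}\reallywidehat{p}_{k^\prime}\,d\theta = \mathbb{E}_{\reallywidehat{p}_k}[W]$ is the normalizing constant of the combined posterior; note that $Z \le M$ is itself finite by the same bound, so the target is integrable. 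The rightmost finiteness uses $\mathrm{Var}_{\reallywidehat{p}}[h]<\infty$. Hence both $\mathrm{Var}_{\reallywidehat{p}_k}[W]$ and $\mathrm{Var}_{\reallywidehat{p}_k}[Wh]$ are finite.

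With these two variances finite, I would apply the bivariate law of large numbers and central limit theorem to the pair $\big(\tfrac{1}{T}\sum_t W(\theta_t)h(\theta_t),\,\tfrac{1}{T}\sum_t W(\theta_t)\big)$ and then the delta method to their ratio, obtaining consistency of $\overline{h}$ for $\mathbb{E}_{\reallywidehat{p}}[h]$ together with a limiting normal law of finite asymptotic variance; this is precisely the statement invoked from \citet{Geweke89}. I expect the main obstacle to be conceptual rather than computational: since the weights are normalized to sum to one, $\overline{h}$ is a ratio estimator rather than a plain average, so finite variance cannot be read off from a single moment bound but requires controlling numerator and denominator jointly before passing through the delta method. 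The uniform bound on $W$ supplied by the Theorem is the lever that converts the target-side hypothesis $\mathrm{Var}_{\reallywidehat{p}}[h]<\infty$ into the proposal-side second moments the argument consumes, and the only remaining care is to confirm that Geweke's regularity conditions---absolute continuity and existence of these second moments---hold in full.
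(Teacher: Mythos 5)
Your proposal is correct and takes essentially the same route as the paper: the paper's proof of this corollary consists precisely of combining the preceding Theorem (uniformly bounded weights) with the sufficient conditions of Geweke (1989), and your argument does exactly that, merely unpacking the Geweke machinery---proposal-side second-moment bounds, bivariate CLT, delta method for the ratio---that the paper invokes as a black box. Your closing observation is also the right reading shared with the paper: ``finite variance'' here is the finite asymptotic variance of the self-normalized estimator in Geweke's sense, not an exact finite-sample variance.
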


\subsection{SAMPLING FROM THE APPROXIMATE POSTERIOR}

We can also use the samples $\theta_1, \ldots, \theta_T$ from $\reallywidehat{p}_k$ and their associated importance weights $w_1, \ldots, w_T$ to obtain approximate samples $\theta_1^\star,  \ldots, \theta_R^\star$ from $\reallywidehat{p}$ using sampling/importance resampling (SIR). With this, $\mathbb{E}_{\reallywidehat{p}}[h(\theta)]$ can be directly estimated as a Monte Carlo integral over the new samples. This procedure easily is done by choosing $\theta_r^\star = \theta_t$ with probability proportional to $w_t$. The required steps are detailed in Algorithm \ref{algo:sir}.

\begin{algorithm}
\caption{NAP-SIR}
\label{algo:sir}
\begin{algorithmic}[1]
    \Require Subposterior approximations $\reallywidehat{p}_1, \ldots, \reallywidehat{p}_K$, number of candidate samples $T$, final number of samples $R$, chosen subposterior index $k \in \{1, \ldots,  K\}$.
    \Ensure $R$ samples $\theta_1^\star, \ldots, \theta_R^\star$ from  $\reallywidehat{p}$. \vspace{0.1in}
    \For{$t = 1,\ldots, T$}
        \State Sample $\theta_t \sim \reallywidehat{p}_k$
        \State $w_t \leftarrow  {\prod_{k^\prime} \reallywidehat{p}_{k^\prime}(\theta_t)}/{\reallywidehat{p}_{k}(\theta_t)}$
    \EndFor
    \State $c \leftarrow \sum_t w_t$
    \State  $\bm{w} \leftarrow (w_1/c, \ldots, w_T/c)$
       \For{$r = 1, \ldots, R$} 
        \State  Draw $t$ from $\mathrm{Categorical}(\bm{w})$
        \State  $\theta_r^\star \leftarrow \theta_t$ 
    \EndFor
\end{algorithmic}
\end{algorithm}

Note that Algorithm~\ref{algo:sir} provides, for any single $k$, a valid sampler for the approximate posterior $\reallywidehat{p}$. However, in practice it is beneficial to apply the algorithm for many or all $k$ to provide better exploration of the parameter space.

\subsection{TIME COMPLEXITY}

We now analyze the time complexity of the proposed method with respect to the number of subposteriors $K$, the number of samples $S$ drawn from each of the subposteriors $p_1, \ldots, p_K$, and the number of samples $R$ which we wish to obtain from the aggregated posterior.

Obtaining $R$ samples from the approximate posterior using NAP consists of a single pass of the two following steps:
\begin{description}
    \item[\textbf{Step 1.}] In parallel, for $k=1,\ldots,K$, fit a real NVP transformation to the samples drawn from the $k$th subposterior at worker $k$.
    \item[\textbf{Step 2.}] Gather the subposterior approximations. Choose a $ k \in \{1, \ldots, K\}$, choose $T \geq R$ and use  Algorithm \ref{algo:sir} to draw $R$ samples from $\reallywidehat{p}$. 
\end{description}

Step 1 involves the usual costs of learning real NVP networks, which can be done using gradient-based methods, such as ADAM \citep{ADAM}.
%
Assuming the number of layers and weights per layer in each network is fixed, evaluating $\reallywidehat{p}(\theta) = \prod_k \reallywidehat{p}_k(\theta)$ takes linear time in $D K$.  Further taking $T = \lceil c R \rceil$ for some constant $c\geq1$, we conclude Step 2 can be executed in $\mathcal{O}(R D K + R^2)$. 

\subsection{COMMUNICATION COSTS}

It is important to note that typically $S \gg |\mathcal{D}_k|$, i.e., a worker ouputs a much larger number of  subposterior samples than the size of the data subset $\mathcal{D}_k$ it processes. 
Even if the dataset is split among workers to improve computational efficiency through parallel inference, 
sending subposterior samples back to the server for aggregation can amount to considerable communication costs.
Therefore, we also examine communication cost of NAP, and contrast it to currently available methods.

The communication cost of the proposed NAP amounts to $\mathcal{O}(K D)$, corresponding  to the cost of communicating the NVP networks to the server, which does not depend on $S$. 
On the other hand, current methods have their accuracy intrinsically tied to the number of  subposterior samples communicated to the server, resulting in $\mathcal{O}(S K D)$ communication costs. 
For example, the partition tree based method of  \citet{Wang2015} requires recursive pair-wise aggregation of subposteriors, which calls for centralization of subposterior samples.
%
The non-parametric and semi-parametric methods proposed by \citet{Neiswanger2014} require computing kernel-density estimates defined on each subposterior individually and centralizing them to subsequently execute an MCMC step to sample from their product. 
Similar costs are implied by the strategy of \citet{nemeth2018}, which fits Gaussian process approximations and centralizes them to use MCMC to sample from the product of the expected values of their exponentiated predictives.

In other words, with NAP, subposteriors can be made arbitrarily accurate by drawing more subposterior samples (as long as local resources allow) with no additional effect on the cost of communicating the networks to the server.

\section{EXPERIMENTAL RESULTS}
\label{sec:experiments}
We evaluated the performance of the proposed method in four different experiments, comparing it against several aggregation methods\footnote{We have used the implementations available at \\ \href{https://github.com/richardkwo/random-tree-parallel-MCMC}{https://github.com/richardkwo/random-tree-parallel-MCMC}}: 
\begin{itemize}
    \item \textbf{Parametric (PARAM):} approximates the posterior as a product of multivariate normal densities fitted to each subposterior \citep{Neiswanger2014}.
    \item \textbf{Non-parametric (NP)}: uses  kernel density estimates to approximate the subposteriors, takes their product and samples from it using Gibbs sampling \citep{Neiswanger2014}. 
    \item \textbf{Semi-parametric (SP)}: a hybrid between the two former approaches \citep{Neiswanger2014}.
    \item \textbf{Consensus (CON)}: takes weighted averages of subposterior samples to obtain approximate samples from the target posterior \citep{41849}.
    \item \textbf{Parallel aggregation using random partition trees (PART)}: uses partition trees to fit hyper-histograms to the target posterior using subposterior samples \citep{Wang2015}.
\end{itemize}

In the first experiment, we target a uni-modal distribution of an intricate shape.
In the second, we approximate a bi-variate multi-modal distribution.
In the third, we evaluate the performance of our method when approximating logistic regression posteriors in high dimensions.
Finally, in the last one we analyze the performance of our method when there is a clear discrepancy among the subposteriors being merged.

All MCMC simulations were carried out using the python interface of the Stan probabilistic programming language\citep{Carpenter2017}, which implements the no-U-turn sampler. For each subposterior, we draw 4000 samples using 16 chains with an equal number of samples as warm-up.  The same holds for the target (ground truth) posterior, computed on centralized data.
The real NVP networks were implemented with PyTorch\footnote{\href{https://pytorch.org}{https://pytorch.org}} using three transformations ($L=3$) and Gaussian spherical base densities. The scale and translation networks were all implemented as multi-layer perceptrons with two hidden-layers comprising $256$ nodes each. The layers of these networks were all equipped with rectified linear units except for the the last layer of each scale network, which was equipped with the hyperbolic tangent activation function.
The network parameters were optimized using ADAM \citep{ADAM} over 1000 iterations with learning rate $10^{-4}$.

\subsection{WARPED GAUSSIAN}

\begin{figure*}[htb!]
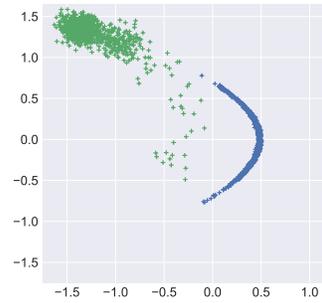
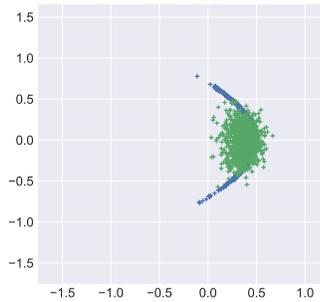
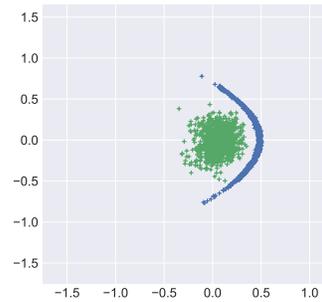
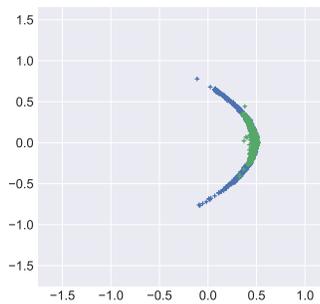
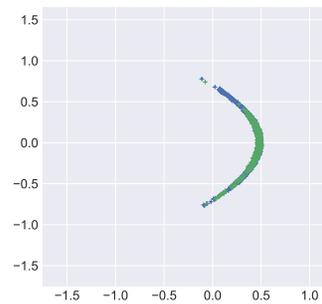

    \centering
    \begin{subfigure}{0.33\textwidth}
    \includegraphics[width=0.9\textwidth]{gauss_parametric.pdf}
    \caption{Parametric}
    \end{subfigure}
    \begin{subfigure}{0.33\textwidth}
    \includegraphics[width=0.9\textwidth]{gauss_semiparametric.pdf}
    \caption{Semi-parametric}
    \end{subfigure}
    \begin{subfigure}{0.33\textwidth}
    \includegraphics[width=0.9\textwidth]{gauss_nonparametric.pdf}
    \caption{Non-parametric}
    \end{subfigure} \\
    \begin{subfigure}{0.33\textwidth}
    \includegraphics[width=0.9\textwidth]{gauss_consensus.pdf}
    \caption{Consensus}
    \end{subfigure}
    \begin{subfigure}{0.33\textwidth}
    \includegraphics[width=0.9\textwidth]{gauss_part.pdf}
    \caption{PART}
    \end{subfigure}
    \begin{subfigure}{0.33\textwidth}
    \includegraphics[width=0.9\textwidth]{gauss_nvp.pdf}
    \caption{NAP}
    \end{subfigure}
    \caption{MCMC samples for the warped Gaussian model obtained on the centralized dataset (ground truth), in blue, against samples from posterior approximations using different embarassingly parallel MCMC methods, in green.}
    \label{fig:warped}
\end{figure*}

We first consider  inference in a warped Gaussian model which exhibits a banana-shaped posterior and is described by the generative model:
\begin{equation*}
y \sim \mathcal{N}(\mu_1 + \mu_2^2, \sigma^2),
\end{equation*}
where the true values of the parameters $\mu_1$ and $\mu_2$ are $0.5$ and $0$, respectively. The variance $\sigma^2$ is set to $2$ and treated as a known constant. We draw $10000$ observations from the model and distribute them in $K=10$ disjoint sets. Gaussian priors with zero mean and variance 25 were placed both on $\mu_1$ and $\mu_0$.

For NAP, we used Algorithm \ref{algo:sir} to draw $4000$ samples from the approximate posterior, using $16000$ samples drawn from the individual subposterior approximation. To avoid possible underflow from normalizing a large number of importance weights, we do this in $K$ installments, in each of which a suposterior approximation is used as a proposal. The same number of samples was drawn using each of the competing methods.

Figure \ref{fig:warped} shows\footnote{The experiment was repeated with multiple random seeds, yielding similar results.} the samples from the approximate posterior obtained with different aggregation methods, plotted against the posterior obtained using the entire sample set. 
Of all the methods, only NAP and PART were flexible enough to mimic the banana shape of the posterior. 
PART, however, is overly concentrated when compared to the ground truth, while NAP more faithfully spreads the mass of the distribution.

\subsection{MIXTURE OF BETAS}

\begin{figure*}[htb!]
    \centering
    \begin{subfigure}{0.33\textwidth}
    \includegraphics[width=0.9\textwidth]{warped_parametric.pdf}
    \caption{Parametric}
    \end{subfigure}
    \begin{subfigure}{0.33\textwidth}
    \includegraphics[width=0.9\textwidth]{warped_semiparametric.pdf}
    \caption{Semi-parametric}
    \end{subfigure}
    \begin{subfigure}{0.33\textwidth}
    \includegraphics[width=0.9\textwidth]{warped_nonparametric.pdf}
    \caption{Non-parametric}
    \end{subfigure} \\
    \begin{subfigure}{0.33\textwidth}
    \includegraphics[width=0.9\textwidth]{warped_consensus.pdf}
    \caption{Consensus}
    \end{subfigure}
    \begin{subfigure}{0.33\textwidth}
    \includegraphics[width=0.9\textwidth]{warped_part.pdf}
    \caption{PART}
    \end{subfigure}
    \begin{subfigure}{0.33\textwidth}
    \includegraphics[width=0.9\textwidth]{warped_nvp.pdf}
    \caption{NAP}
    \end{subfigure}
    \caption{MCMC samples for the mixture of Betas model obtained on the centralized dataset (ground truth), in blue, against samples from posterior approximations using different embarassingly parallel MCMC methods, in green.}
    \label{fig:mixture}
\end{figure*}

We now consider performing inference in the shape parameters $\alpha_1$ and $\alpha_2$ of the following two-component mixture of Betas:
\begin{equation*}
    p(y | \alpha_1, \alpha_2) = \frac{1}{2} \mathrm{Beta}(y | \alpha_1, \beta_1) +  \frac{1}{2} \mathrm{Beta}(y | \alpha_2, \beta_2), 
\end{equation*}
where the true values of the parameters of interest are $\alpha_1=0.5$ and $\alpha_2=1.0$. Furthermore, $\beta_1 = \beta_2 = 1$ are known constants, which makes the model clearly bi-modal.
Independent Gamma priors with shape $0.5$ and scale $1.0$ were placed on $\alpha_1$ and $\alpha_2$.

As before, we drew $10000$ observations from the model and distributed them in $K=10$ disjoint sets for parallel inference. 
Samples from the approximate posterior for each aggregation algorithm were drawn in the same fashion as in the previous experiment.

Figure \ref{fig:mixture} shows\footnote{The experiment was repeated with multiple random seeds, yielding similar results.} the samples from the approximate posteriors obtained with each aggregation method plotted against the target posterior, obtained using the entire sample set. The proposed method and PART clearly are the only ones that capture the multi-modality of the posterior. NAP, however, presented a better fit to the true posterior while PART placed more mass in low-density regions. 

\subsection{BAYESIAN LOGISTIC REGRESSION}

We now explore how our method behaves in higher dimensions in comparison to its alternatives. For this purpose we consider inference on the simple logistic regression model with likelihood
\begin{align*}
    y_i \sim \mathrm{Bernoulli}\big(  \sigma(\theta_{1:p} \cdot x_i + \theta_0) \big) \quad \forall 1 \leq i \leq N ,
\end{align*}
where $\cdot$ denotes the dot product, $\sigma(t) = (1 + e^{-t})^{-1}$ is the logistic function, and $\theta_0, \ldots, \theta_p$ receive independent $\mathcal{N}(0, \sqrt{5})$ priors. The true value $\theta_0^{\prime}$ of $\theta_0$ is held at $-3$ and the remaining  $\theta_{1}^\prime, \ldots, \theta_p^\prime$ are independently drawn from a normal distribution with zero-mean and variance $0.25$.   

To generate a sample pair $(x_i, y_i)$, we first draw $x_i$ from $\mathcal{N}(\bm{0}, \Sigma)$, where the covariance matrix $\Sigma$ is such that
\[
    \Sigma_{i,j} = 0.9^{|i - j|} \quad \forall 1 \leq i, j \leq p.
\]
Then, $y_i$ is computed by rounding $\sigma(\theta_{1:p}^\prime \cdot x_i + \theta_0^\prime)$ to one if it is at least $0.5$, and to zero otherwise.


For each value of $p \in \{ 25, 50, 100 \}$, we draw $N=10000$ sample pairs using the scheme described above and distribute them in $K=50$ disjoint sets for parallel inference. 
As in previous experiments, we use NAP and its counterparts to merge the subposteriors and draw $4000$ samples from the approximate posterior.


Table \ref{tab:log_res} presents the results for each of the aggregation methods in terms of the following performance measures:
\begin{itemize}
    \item \textbf{Root mean squared error} (RMSE) between the mean $\overline{\theta}$ of the approximate posterior samples $\{\theta^\star_r\}_{r=1}^R$ and the mean $\overline{\theta}^\prime$ of samples $\{\theta_r^\prime\}_{r=1}^R$ from the ground truth posterior;
    \item \textbf{Posterior concentration ratio} ($\mathcal{R}$), computed as: 
    \[
    \sqrt{ \sum_r \| \theta_r - \overline{\theta}^\prime  \|_2^2  / \sum_r \| \theta_r^\prime - \overline{\theta}^\prime \|_2^2},
    \]
    comparing the concentration of the two posteriors around the ground truth mean (values close to one are desirable);
    \item \textbf{KL divergence}; ($\mathrm{D}_\text{KL}$) between a multivariate normal approximation of the aggregated posterior and a multivariante normal approximation of the true one, both computed from samples.
\end{itemize}
Experiments were repeated ten times for each value of $p$, in each of which a new $\theta^\prime$ was drawn. 
Additionally, average computing times for each aggregation method are shown in Table \ref{tab:log_time}.

When compared to the other methods, for all values of $p$, NAP presents a mean closer to the one obtained using centralized inference (smaller RMSE) and has a more accurate spread around it ($\mathcal{R}$ closer to one). In terms of KL divergence, only at $p=25$, PARAM outperforms NAP by a relatively small margin. Besides this case, NAP performs orders of magnitude better than the other methods, with increasing disparity as $p$ grows.

\begin{table*}[htb!]
    \centering
    \begin{tabular}{@{\extracolsep{4pt}}lccccccccccccc}
         & \multicolumn{3}{c}{$p=25$}  &  \multicolumn{3}{c}{$p=50$}  &  \multicolumn{3}{c}{$p=100$} \\ \cline{2-4} \cline{5-7} \cline{8-10}
        &  RMSE & $\mathcal{R}$ & $\mathrm{D}_\text{KL}$ & RMSE & $\mathcal{R}$ & $\mathrm{D}_\text{KL}$ & RMSE & $\mathcal{R}$ & $\mathrm{D}_\text{KL}$  \\
         NAP & $\bm{1.95}$ & $\bm{13.95}$ & 791.86 & $\bm{1.07}$ & $\bm{13.08}$ & $\bm{1539.32}$ &  $\bm{0.63}$ & $\bm{12.43}$ & $\bm{3493.35}$\\ 
         PART & 3.29 & 24.38 & 4263.53 & 2.44 & 31.27 & 20159.64 & 1.51 & 31.80 & 75423.10\\
         PARAM & 2.56 & 18.34 & $\bm{589.12}$ & 1.99 & 24.37 & 2568.57 & 1.32 & 26.07 & 11245.58 \\
         SP & 2.43 & 17.36 & 1586.80 & 2.02 & 24.62 & 7589.26 & 1.39 & 27.26 & 36994.45\\
         NP & 2.39 & 17.07 & 1343.88 & 2.01 & 24.54 & 7202.50 & 1.39 & 27.26 & 35313.77\\
         CON & 3.51 & 25.43 & 10654.78 & 3.08 & 37.92 & 56001.25 & 2.02 & 39.96 & 186275.86 \\
    \end{tabular}
    \caption{Comparison of different aggregation methods for embarassingly parallel MCMC inference on the logistic regression model with $p$ covariates. The values presented are averages over ten repetitions of the experiments. The best results are in bold.}
    \label{tab:log_res}
\end{table*}

\begin{table}[h!]
    \centering
    \begin{tabular}{@{\extracolsep{4pt}}lccc}
         & {$p=25$}  & {$p=50$}  &  {$p=100$} \\ \cline{2-4}
         NAP & 337.28 & 363.88 & 426.95\\ 
         PART & 117.10 & 245.85 &  727.99 \\
         PARAM & 33.36 & 62.45 & 115.40 \\
         SP & 476.90 & 749.07 & 18378.862 \\
         NP & 43.47 & 72.28 & 127.34 \\
         CON & 32.59 & 62.01 & 125.49 \\
    \end{tabular}
    \caption{Average computing times for different aggregation methods for the logistic regression experiment.}
    \label{tab:log_time}
\end{table}


\subsection{RARE CATEGORICAL EVENTS}

\begin{figure*}[ht!]
\centering
\includegraphics[width=1.0\textwidth]{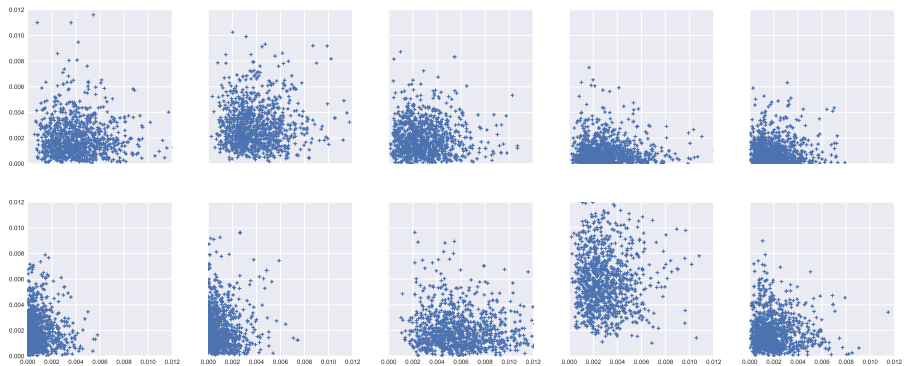}
\caption{Scatter plots of the marginal for $(r_1, r_2)$ for each of the $K=10$ subposteriors within one of the experiment rounds.}
\label{fig:cat}
\end{figure*}

In the scenarios explored in the previous experiments, there is no specific reason to believe that the subposteriors differ drastically from each other. We now consider parallel inference on the parameters $(\lambda_1, \lambda_2, \lambda_3)$ of a categorical model, with respective outcomes $A_1, A_2$ and $A_3$,  where the probability of observing one outcome is much higher than the others, i.e. $\lambda_3 \gg \lambda_1, \lambda_2$. 

We simulate $N=10000$ data points from $\mathrm{Categorical}(\lambda^\prime_1, \lambda^\prime_2, \lambda^\prime_3)$ with $\lambda_1^\prime = \lambda_2^\prime = 2 K / N$. We then partition the data into $K=10$ disjoint subsets, run MCMC in parallel and apply different aggregation methods to obtain samples from the approximate posterior. The aggregated posteriors are then compared with the one obtained using the complete data.

Since the expected number of $A_1$ and $A_2$ per partition is 2, it often occurs than some partitions have only $A_3$.  Figure \ref{fig:cat} illustrates how disparate the subposteriors can be, depending on the specific partitioning of data.

To compensate for the variability in experimental results due to the random partitioning of the subsets, we repeated the experiments one hundred times with different random seeds, and report average results in Table \ref{tab:cat}.  NAP clearly outperforms its competitors, with results that are orders of magnitude better.

\begin{table}[h!]
    \centering
    \resizebox{\columnwidth}{!}{
    \begin{tabular}{@{\extracolsep{4pt}}lccc}
        &  RMSE & $\mathcal{R}$ & $\mathrm{D}_\text{KL}$ \\ \cline{2-4}
         NAP & $\bm{0.71 \times 10^{-3}}$  & $\bm{2.61}$ & $\bm{106010.45 \times 10^{0}}$  \\ 
         PART & $0.27 \times 10^{-2}$ & 179.39  & $623035.37 \times 10^{1}$  \\
         PARAM & $0.11 \times 10^{-1}$ & 39.73 &  $469483.18 \times 10^{2}$ \\
         SP & $0.51 \times 10^{-2}$ & 267.30 & $945558.50 \times 10^{4}$ \\
         NP & $0.19 \times 10^{-1}$ & 268.97 & $968806.96 \times 10^{4}$ \\
         CON & $0.16 \times 10^{0}$ & 550.67   & $276976.28 \times 10^{3}$ \\
    \end{tabular}}
    \caption{Comparison of different aggregation methods for embarassingly parallel MCMC inference on the rare categorical events model. The best results are in bold.}
    \label{tab:cat}
\end{table}


\section{DISCUSSION}

We proposed an embarrassingly parallel MCMC scheme in which each subposterior density is mapped to a tractable form using a deep invertible generative model. 
We capitalized on the ease of sampling from the mapped subposteriors and evaluating their log density values to build an efficient importance sampling scheme to merge the subposteriors.
Imposing mild assumptions on the structure of the network, we proved that our importance sampling scheme is stable.

While in this work we gave special attention to the use of real NVP networks, our approach could potentially employ other invertible models, such as the Glow transform \citep{GLOW} or FFJORD \citep{grathwohl2018scalable}, without losing theoretical properties, as long as one can guarantee log densities remain bounded. 
If the bounds are difficult to verify, one could still resort to truncated forms of importance sampling \citep{Ionides2008, Vehtari2015} to control the variance of importance sampling estimates.

Our experimental results demonstrated that NAP is capable of capturing intricate posteriors and coping with heteregenous subposteriors.
In particular, we observed that it significantly outperformed current methods in high-dimensional settings. A possible explanation for this is that, unlike the density estimation techniques underlying the competing methods, the real NVP transformations used in our method, are specifically designed for high-dimensional data such as images. 

Finally, the generative models we use serve as a intermediate representation to the subposterior, the size of  which does not depend on the number of subposterior samples. 
Thus, workers can produce arbitrarily accurate subposterior estimates by drawing additional samples, without 
affecting the cost of communicating the subposteriors to the server, or the computational cost of aggregating them into a final posterior estimate.

\label{sec:discussion}

\subsubsection*{Acknowledgements}
DM, PB and SK were funded by the Academy of Finland, grants 319264 and 294238. The authors gratefully acknowledge the computational resources provided by the Aalto Science-IT project and support from the Finnish Center for Artificial Intelligence (FCAI).


%

\bibliographystyle{plainnat}
\bibliography{references}

\begin{thebibliography}{18}
\providecommand{\natexlab}[1]{#1}
\providecommand{\url}[1]{\texttt{#1}}
\expandafter\ifx\csname urlstyle\endcsname\relax
  \providecommand{\doi}[1]{doi: #1}\else
  \providecommand{\doi}{doi: \begingroup \urlstyle{rm}\Url}\fi

\bibitem[Ahn et~al.(2014)Ahn, Shahbaba, and Welling]{Ahn2014}
Sungjin Ahn, Babak Shahbaba, and Max Welling.
\newblock Distributed stochastic gradient {MCMC}.
\newblock In \emph{Proceedings of the 31st International Conference on
  International Conference on Machine Learning}, ICML'14, pages
  II--1044--II--1052. JMLR.org, 2014.

\bibitem[Angelino et~al.(2016)Angelino, Johnson, and
  Adams]{Angelino+others:2016}
Elaine Angelino, Matthew~James Johnson, and Ryan~P. Adams.
\newblock Patterns of scalable {B}ayesian inference.
\newblock \emph{Foundations and Trends in Machine Learning}, 9\penalty0
  (2-3):\penalty0 119--247, 2016.
\newblock \doi{10.1561/2200000052}.

\bibitem[Carpenter et~al.(2017)Carpenter, Gelman, Hoffman, Lee, Goodrich,
  Betancourt, Brubaker, Guo, Li, and Riddell]{Carpenter2017}
Bob Carpenter, Andrew Gelman, Matthew Hoffman, Daniel Lee, Ben Goodrich,
  Michael Betancourt, Marcus Brubaker, Jiqiang Guo, Peter Li, and Allen
  Riddell.
\newblock Stan: {A} probabilistic programming language.
\newblock \emph{Journal of Statistical Software}, 76\penalty0 (1), 2017.

\bibitem[Dinh et~al.(2017)Dinh, Sohl{-}Dickstein, and Bengio]{Dinh2017}
Laurent Dinh, Jascha Sohl{-}Dickstein, and Samy Bengio.
\newblock Density estimation using real {NVP}.
\newblock In \emph{International Conference on Learning Representations}, 2017.

\bibitem[Geweke(1989)]{Geweke89}
John Geweke.
\newblock Bayesian inference in econometric models using {M}onte {C}arlo
  integration.
\newblock \emph{Econometrica}, 57\penalty0 (6):\penalty0 1317--1339, 1989.

\bibitem[Grathwohl et~al.(2019)Grathwohl, Chen, Bettencourt, and
  Duvenaud]{grathwohl2018scalable}
Will Grathwohl, Ricky T.~Q. Chen, Jesse Bettencourt, and David Duvenaud.
\newblock Scalable reversible generative models with free-form continuous
  dynamics.
\newblock In \emph{International Conference on Learning Representations}, 2019.
\newblock URL \url{https://openreview.net/forum?id=rJxgknCcK7}.

\bibitem[Ionides(2008)]{Ionides2008}
Edward~L Ionides.
\newblock Truncated importance sampling.
\newblock \emph{Journal of Computational and Graphical Statistics}, 17\penalty0
  (2):\penalty0 295--311, 2008.
\newblock \doi{10.1198/106186008X320456}.

\bibitem[Johnson et~al.(2013)Johnson, Saunderson, and Willsky]{Johnson2013}
Matthew Johnson, James Saunderson, and Alan Willsky.
\newblock Analyzing hogwild parallel {G}aussian {G}ibbs sampling.
\newblock In C.~J.~C. Burges, L.~Bottou, M.~Welling, Z.~Ghahramani, and K.~Q.
  Weinberger, editors, \emph{Advances in Neural Information Processing Systems
  26}, pages 2715--2723. Curran Associates, Inc., 2013.

\bibitem[Kingma and Ba(2014)]{ADAM}
Diederik~P. Kingma and Jimmy Ba.
\newblock Adam: {A} method for stochastic optimization.
\newblock \emph{CoRR}, abs/1412.6980, 2014.

\bibitem[{Kingma} and {Dhariwal}(2018)]{GLOW}
Diederik~P. {Kingma} and Prafulla {Dhariwal}.
\newblock {Glow: Generative Flow with Invertible 1x1 Convolutions}.
\newblock \emph{arXiv e-prints}, art. arXiv:1807.03039, Jul 2018.

\bibitem[Ma et~al.(2015)Ma, Chen, and Fox]{Ma2015}
Yi-An Ma, Tianqi Chen, and Emily~B. Fox.
\newblock A complete recipe for stochastic gradient {MCMC}.
\newblock In \emph{Proceedings of the 28th International Conference on Neural
  Information Processing Systems}, NIPS'15, pages 2917--2925, Cambridge, MA,
  USA, 2015. MIT Press.

\bibitem[Neiswanger et~al.(2014)Neiswanger, Wang, and Xing]{Neiswanger2014}
Willie Neiswanger, Chong Wang, and Eric~P. Xing.
\newblock Asymptotically exact, embarrassingly parallel {MCMC}.
\newblock In \emph{Proceedings of the Thirtieth Conference on Uncertainty in
  Artificial Intelligence}, UAI'14, pages 623--632, Arlington, Virginia, United
  States, 2014. AUAI Press.

\bibitem[Nemeth and Sherlock(2018)]{nemeth2018}
Christopher Nemeth and Chris Sherlock.
\newblock Merging {MCMC} subposteriors through {G}aussian-process
  approximations.
\newblock \emph{Bayesian Analysis}, 13\penalty0 (2):\penalty0 507--530, 06
  2018.
\newblock \doi{10.1214/17-BA1063}.

\bibitem[Quiroz et~al.(2018)Quiroz, Kohn, Villani, and Tran]{Quiroz2018}
Matias Quiroz, Robert Kohn, Mattias Villani, and Minh-Ngoc Tran.
\newblock Speeding up {MCMC} by efficient data subsampling.
\newblock \emph{Journal of the American Statistical Association}, 114:\penalty0
  831--843, 2018.
\newblock \doi{10.1080/01621459.2018.1448827}.

\bibitem[Robert et~al.(2018)Robert, Elvira, Tawn, and Wu]{Robert2018}
Christian~P. Robert, VÃ­ctor Elvira, Nick Tawn, and Changye Wu.
\newblock Accelerating {MCMC} algorithms.
\newblock \emph{Wiley Interdisciplinary Reviews: Computational Statistics},
  10\penalty0 (5):\penalty0 e1435, 2018.
\newblock \doi{10.1002/wics.1435}.

\bibitem[Scott et~al.(2016)Scott, Blocker, Bonassi, Chipman, George, and
  McCulloch]{41849}
Steven~L. Scott, Alexander~W. Blocker, Fernando~V. Bonassi, Hugh~A. Chipman,
  Edward~I. George, and Robert~E. McCulloch.
\newblock Bayes and big data: The consensus {M}onte {C}arlo algorithm.
\newblock \emph{International Journal of Management Science and Engineering
  Management}, 11:\penalty0 78--88, 2016.

\bibitem[{Vehtari} et~al.(2015){Vehtari}, {Gelman}, and {Gabry}]{Vehtari2015}
Aki {Vehtari}, Andrew {Gelman}, and Jonah {Gabry}.
\newblock {Pareto Smoothed Importance Sampling}.
\newblock \emph{arXiv e-prints}, art. arXiv:1507.02646, Jul 2015.

\bibitem[Wang et~al.(2015)Wang, Guo, Heller, and Dunson]{Wang2015}
Xiangyu Wang, Fangjian Guo, Katherine~A. Heller, and David~B. Dunson.
\newblock Parallelizing {MCMC} with random partition trees.
\newblock In \emph{Proceedings of the 28th International Conference on Neural
  Information Processing Systems}, NIPS'15, pages 451--459, Cambridge, MA, USA,
  2015. MIT Press.

\end{thebibliography}

\end{document}